\newcommand{\reals}{\mathbb{R}}
\newcommand{\bx}{\mathbf{x}}
\newcommand{\by}{\mathbf{y}}
\DeclareMathOperator*{\argmin}{argmin}
\newtheorem*{theorem*}{Theorem}
\newtheorem{lemma}{Lemma}
\newtheorem{example}{Example}
\title{Mirror Diffusion Models}
\author{
  Jaesung Tae\\
  Department of Computer Science\\
  Yale University\\
  \texttt{jake.tae@yale.edu} \\
}
\begin{document}

\maketitle

\begin{abstract}
Diffusion models have successfully been applied to generative tasks in various continuous domains. However, applying diffusion to discrete categorical data remains a non-trivial task. Moreover, generation in continuous domains often requires clipping in practice, which motivates the need for a theoretical framework for adapting diffusion to constrained domains. Inspired by the mirror Langevin algorithm for the constrained sampling problem, in this theoretical report we propose Mirror Diffusion Models (MDMs). We demonstrate MDMs in the context of simplex diffusion and propose natural extensions to popular domains such as image and text generation.
\end{abstract}

\section{Introduction}

Diffusion models have achieved state-of-the-art in various domains, such as images~\cite{Ho2020DenoisingDP,Song2020ScoreBasedGM}, audio~\cite{Kong2020DiffWaveAV,Popov2021GradTTSAD,Tae2021EdiTTSSE}, and video~\cite{Ho2022VideoDM}. Diffusion models typically involve a forward process, which gradually corrupts the input signal into noise.\footnote{Here, we only consider standard diffusion models which use Gaussian noise. However, we note that there exist alternative formulations of diffusion models which defines the corruption process by any arbitrary operator, such as Cold Diffusion~\cite{Bansal2022ColdDI}.} The stationary distribution of the forward process defines the prior, which serves as a starting point of the reverse process. The model iteratively refines its estimate to produce a sample from the desired data distribution.

Despite the notable success of diffusion models in continuous domains, applying diffusion models to sample from discrete categorical distributions remains a challenging task. The motivation for modeling categorical distributions is clear: for instance, natural language generation can be formulated as a discrete sampling problem, where the goal is to sample from a joint distribution of word tokens, which are inherently discrete. Recently, SSD-LM~\cite{han2022ssdlm} proposed to perform diffusion on the logit probability simplex, which involves shift-scaled representation of one-hot token vectors. However, the design choices for logit simplex diffusion is largely guided by heuristics, and there is a lack of analysis of the specific mechanics behind the validity of its formulation.

In this work, we propose Mirror Diffusion Models (MDMs), which draws inspiration from mirror maps, a common method employed for constrained optimization and sampling. Specifically, we provide a theoretical account of logit simplex diffusion and show that existing methods lack a correction term that is required to obey the manifold structure of the constrained primal space. We ground the demonstration of MDMs in the context of natural language generation, and also propose natural extensions that generalizes simplex diffusion to other domains. 

\section{Background}

In this section, we briefly revisit diffusion models and mirror Langevin dynamics. More in-depth expositions on diffusion models can be found in~\cite{Song2020ScoreBasedGM,Ho2020DenoisingDP,Kingma2021VariationalDM}; for mirror Langevin dynamics and mirror descent for constrained optimization,~\cite{li2021mirror,Jiang2021MirrorLM,Ahn2020EfficientCS,Krichene2017AccelerationAA}.

\subsection{Diffusion}

While various formulations of diffusion models exist, e.g., Denoising Diffusion Probabilistic Models (DDPMs)~\cite{Ho2020DenoisingDP}, Variational Diffusion Models (VDMs)~\cite{Kingma2021VariationalDM}, and Score-based Generative Models (SGMs)~\cite{Song2020ScoreBasedGM}, here we consider the latter, which is based on stochastic differential equations (SDEs). We believe that these formulations are largely interchangeable and are different ways of characterizing a similar family of generative models.

Diffusion models typically involve a forward process, which gradually corrupts the input signal, and a reverse process, where the model iteratively refines its prediction to recover the original signal. The forward diffusion process is defined by an Itô SDE of the form

\begin{equation}
\dd \bx = \mathbf{f}(\bx, t) \dd t + g(t) \dd \mathbf{w}, 
\end{equation}

where $\dd \mathbf{w}$ denotes the standard Bronwian motion and $\mathbf{f}: \reals^d \cross \reals \to \reals^d$ and $g: \reals \to \reals$ are drift and diffusion coefficients, respectively. The corresponding reverse SDE can be derived by considering evolution of the density under the partial differential equation described by the Fokker-Planck equation. Concretely, let $\bar{\mathbf{w}}$ denote the reverse-time Brownian motion. Then the reverse SDE~\cite{Anderson1982ReversetimeDE} is given by

\begin{equation}
\dd \bx = (\mathbf{f}(\bx, t) - g(t)^2 \nabla_\bx \log p_t(\bx)) \dd t + g(t) \dd \bar{\mathbf{w}},
\label{eq:reverse-sde}
\end{equation}

Note that the reverse SDE involves the Stein score function, which is key to modeling the generative process. Diffusion models are typically trained through the denoising score-matching objective, in which the model learns to estimate the score function at every time step $t \in \{1, 2, \dots, T \}$ given a finite time horizon $T$. This discretization is required to implement the forward and backward SDEs via Euler-Maruyama. Concretely, let $\lambda_t$ be a weighting function. The denoising score-matching objective~\cite{Vincent2011ACB,Song2020ScoreBasedGM} is given by

\begin{equation}
\mathcal{L}_\text{DSM} = \mathbb{E}_t \left\{ \lambda_t \mathbb{E}_{\bx_0} \mathbb{E}_{\bx_t | \bx_0}  \left[ \norm{s_{\bm{\theta}}(\bx_t, t) - \nabla \log p(\bx_t | \bx_0)}^2_2 \right] \right\}.
\end{equation}

In practice, the forward SDE is typically discretized into a Gaussian Markov chain. For instance, the variance-preserving SDE in DDPM is given by

$$
\bx_{k + 1} = \sqrt{1 - \beta_{k + 1}} \bx_k + \sqrt{\beta_{k + 1}} \mathbf{z}_{k},
$$

which is a discretization of the continuous-time SDE

$$
\dd \bx = - \frac12 \beta(t) \bx \, \dd t + \sqrt{\beta(t)} \dd \mathbf{w}.
$$

With this construction, given a model that can approximate the score, i.e., $\mathbf{s}_\theta(\bx, t) \approx \nabla \log p_t(\bx)$, we can sample from the stationary distribution of the forward process and simulate the reverse SDE to sample from the learned distribution $p_\text{data}(\bx)$. 

\subsection{Mirror Langevin}

Suppose we want to sample from $\nu \propto e^{-f}$ on $\Omega = \reals^d$. In continuous time, we can use Langevin dynamics, which involves the standard Brownian motion in $\reals^d$:

\begin{equation}
\dd \bx = - \nabla f(\bx) \, \dd t + \sqrt{2} \, \dd \mathbf{w}.
\end{equation}

The seminal work by Jordan-Kinderlehrer-Otto~\cite{Jordan1996THEVF} established the remarkable interpretation of Langevin dynamics as a gradient flow in the Wasserstein space of probability measures with the objective function $D_\text{KL}(\nu \parallel \rho_0)$, where $\rho_0$ denotes the prior distribution such that $\bx_0 \sim \rho_0$. The dynamics of the SDE in the PDE space is well-understood by the Fokker-Planck equation.

However, a one limitation of Langevin dynamics is that it is ineffective for constrained sampling. In many applications, we often want to sample from a distribution with bounded support. In image generation---the most common application of diffusion models---sampled pixel values must lie between $[0, 255]$ to represent RGB colors with the standard 8-bit quantization. Similar constraints arise in waveform or mel-spectrogram generation. Lastly, recent works attempt to perform diffusion on the $(d - 1)$-dimensional standard simplex $\triangle_d = \{ \bx \in \reals^d \mid \sum_{i = 1}^d x_i = 1, x_i \geq 0 \}$ to model discrete categorical generative processes. 

A recent development in constraint sampling is the mirror Langevin algorithm (MLA), which draws inspiration from mirror descent in constrained optimization. Let $\phi: \Omega \to \reals$ be a twice-differentiable function that is of Legendre type, and $\nabla \phi: \Omega \to \reals^d$ denote the mirror map. Recall that the inverse mirror map $\nabla \phi^*: \reals^d \to \Omega$ is given by the gradient of the Lengendre-Fenchel conjugate of $\phi$. Then the update rule for MLA~\cite{li2021mirror,Jiang2021MirrorLM} is given by

\begin{equation}
\bx_{k + 1} = \nabla \phi^* (\nabla \phi(\bx_k) - h \nabla f(\bx_k) + \sqrt{2 h \nabla^2 \phi (\bx_k)} \mathbf{z}_k),
\label{eq:mla}
\end{equation}

MLA can be seen as a Euler-Maruyama discretization of the continuous-time mirror Langevin dynamics:

\begin{equation}
\begin{cases}
\by &= \nabla \phi(\bx) \\
\dd \by &= - \nabla f(\bx) \dd t + \sqrt{2 \nabla^2 \phi(\bx)} \dd \mathbf{w}.
\end{cases}
\label{eq:mld}
\end{equation}

We also note the connection between mirror Langevin and the standard Langevin, which can be established by considering a specific $\phi$.

\begin{example}
(Identity Mirror Map). Let $\norm{\cdot}$ denote the usual norm in $\reals^d$. Let $\phi = \frac12 \norm{\bx}^2$ and $\Omega = \reals^d$. Then $\nabla \phi (\bx) = \bx$, $\phi^* = \frac12 \norm{\bx}^2$, and mirror Langevin dynamics recovers the standard Langevin dynamics in $\Omega$. Likewise, the mirror Langevin algorithm reduces to the Unadjusted Langevin Algorithm (ULA).
\end{example}

Intuitively, mirror Langevin methods reformulate the constrained sampling problem by transporting particles to the dual space, which is no longer bound by the original constraint in the primal space, then leveraging the inverse mirror map to transport variables in the dual space back to the desired primal space.

\subsection{Logit Simplex Diffusion}

Simplex diffusion is a concrete instance of the constrained sampling problem in which the bounded domain is given by the $(d - 1)$-dimensional standard simplex $\triangle_d = \{ \bx \in \reals^d \mid \sum_{i = 1}^n x_i = 1, x_i \geq 0 \}$. Simplex diffusion has potential for wide application, particularly in modeling discrete categorical distributions, whose parameters are naturally supported on the probability simplex by construction. Simplex diffusion has emerged as a promising way of circumventing the continuity-discreteness gap, which is the primary bottleneck in applying existing diffusion models to sample from categorical distributions.

While many variants of simplex diffusion exist, in this work, we focus on the logit simplex diffusion method proposed by SSD-LM~\cite{han2022ssdlm}, a diffusion-based language model. 
Recall that the goal of language modeling is to train a model that learns the joint probability distribution over a sequence of words. In SSD-LM, the forward process occurs on a shift-scaled representation of one-hot vectors of word tokens. Specifically, for each word token, we construct a one-hot vector $\bx$, where the $i$-th token is represented with a sparse vector of all 0's except its $i$-th column, which holds a value of 1. These one-hot vectors are then shift-scaled to produce an output $\by$, which is given by

\begin{equation}
y_{i} = 
\begin{cases}
k & x_{i} = 1 \\
- k & \text{otherwise.} \\
\end{cases}
\label{eq:shiftscale}
\end{equation}

The forward diffusion process occurs on this shift-scaled representation. The model is then trained to estimate the score function in the Euclidean space. At the end of the reverse process, a softmax operation is applied to project the predicted $\hat{\by}$ back onto the probability simplex.\footnote{In practice, each step of the reverse process also involves an argmax and $k$-shift scaling. This detail does not affect our analysis, and we focus on the fact that a softmax is applied at the very end.} Let $d$ denote the cardinality of the vocabulary space, i.e., $d = |\mathcal{V}|$. Then the softmax operation on $\hat{\by}$ is given by

\begin{equation}
\hat{\bx} = \text{softmax}(\hat{\by}) = \left( \frac{e^{\hat{y}_1}}{\sum_{i = 1}^d e^{\hat{y}_j}}, \frac{e^{\hat{y}_2}}{\sum_{i = 1}^d e^{\hat{y}_j}}, \dots, \frac{e^{\hat{y}_n}}{\sum_{i = 1}^d e^{\hat{y}_j}} \right).
\label{eq:softmax}
\end{equation}

The softmax does not produce a one-hot vector, but a categorical distribution over the vocabulary space. Therefore, we can apply existing sampling techniques for natural language generation, such as top-$k$ or top-$p$ nucleus sampling with beam search, which produces a sequence of word tokens, as desired.

This construction effectively transforms the constrained sampling problem on the simplex to an unconstrained sampling problem in Euclidean space $\reals^d$. Intuitively, the shift-scaling process relaxes the constrained domain from the simplex to the Euclidean space, in which any SDE can be simulated. The reverse process also occurs in Euclidean space, and the final softmax ensures that the predicted output is projected back to the simplex.

\section{Mirror Diffusion Models}

Inspired by mirror Langevin methods, we propose Mirror Diffusion Models (MDMs) for constrained sampling. We first outline the structure of mirror diffusion models in relations to MLA and existing diffusion models, then consider its concrete application to simplex diffusion for modeling sequences of categorical variables.

\subsection{Method}

Consider the constrained sampling problem, i.e., we want to sample from a probability distribution $\nu \propto e^{-f}$ supported on a convex set $\Omega \subset \reals^d$. Let $\phi$ be a twice-differentiable strictly convex function which is of the Legendre type. We call $\nabla \phi: \Omega \to \reals^d$ the mirror map to the dual space.

The core difference between MDM and standard diffusion models is that in MDMs, we differentiate between the primal and dual space. In standard diffusion models, such as SGMs, both the sampling distribution and the prior distribution lie in Euclidean space, i.e., the distributions are supported on $\reals^d$. In contrast, in MDMs, the desired sampling distribution is supported on $\Omega$, and only the prior lies on $\reals^d$. 

For concreteness, consider the case when $\Omega = \triangle$. We first map observed data points $\bx_0$ on the probability simplex to the dual space via the mirror map, i.e., $\by_0 = \nabla \phi (\bx_0)$, and run forward diffusion in the dual space according to mirror Langevin dynamics. Then, a score-based generative model is trained to learn the score function in the dual space so that we can simulate the reverse SDE, which eventually yields $\hat{\by}_0 \approx \by_0$. The mechanics of training an MDM is equivalent to that of standard diffusion models. This process is demonstrated in Algorithm~\ref{alg:training}.

Once an MDM is trained, we first start from the prior distribution in the dual space, then use the score estimates from the model to simulate the reverse SDE. Combining Equations~\ref{eq:reverse-sde} and~\ref{eq:mld}, the reverse SDE for mirror Langevin dynamics can be written as

\begin{equation}
\dd \by 
= (- \nabla f(\bx) - 2 \nabla^2 \phi(\bx) \nabla \log p(\by)) \dd t + \sqrt{2 \nabla^2 \phi(\bx)}\dd \bar{\mathbf{w}}
\label{eq:mdm-reverse}
\end{equation}

Note that the mirror map and its inverse provides a direct mapping from $\bx$ to $\by$ and vice versa, so the reverse SDE is tractable. Like standard SGMs, the reverse SDE can be discretized via Euler-Maruyama and simulated over a discrete finite time horizon $T$. After $T$ denoising steps, the final estimate is mapped back to the primal space via the inverse mirror map, i.e., $\hat{\bx}_0 = \nabla \phi^* (\hat{\by}_0) \approx \bx_0$. This process is schematically shown in Algorithm~\ref{alg:sampling}.

\begin{figure}[t]
\begin{minipage}[t]{0.495\textwidth}
\begin{algorithm}[H]
  \caption{Training} \label{alg:training}
  \small
  \begin{algorithmic}[1]
    \Repeat
      \State $\bx_0 \sim q(\bx_0)$
      \State $\by_0 \gets \nabla \phi(\bx_0)$
      \State $t \sim \mathrm{Uniform}(\{1, \dotsc, T\})$
      \State $\mathbf{z} \sim \mathcal{N}(0,\mathbb{I})$
      \State Take gradient descent step on
      \Statex $\qquad \grad_\theta \| \mathbf{z} - \mathbf{f}_\theta(\tilde{\mathbf{z}}, t) \|^2$
    \Until{converged}
  \end{algorithmic}
\end{algorithm}
\end{minipage}
\hfill
\begin{minipage}[t]{0.495\textwidth}
\begin{algorithm}[H]
  \caption{Sampling} \label{alg:sampling}
  \small
  \begin{algorithmic}[1]
    \vspace{.04in}
    \State $\by_T \sim \mathcal{N}(0, \mathbb{I})$
    \For{$t=T, \dotsc, 1$}
      \State $\mathbf{z} \sim \mathcal{N}(0,\mathbb{I})$
      \State $\hat{\mathbf{s}} \gets \mathbf{s}_\theta(\by_t, t)$
      \State $\by_{t - 1} \gets$ \texttt{solve\_sde}($\by_t, \hat{\mathbf{s}}, \mathbf{z}$)
    \EndFor
    \State \textbf{return} $\nabla \phi^*(\by_0)$
  \end{algorithmic}
\end{algorithm}
\end{minipage}
\label{alg:overview}
\end{figure}

\subsection{Simplex Diffusion}

A natural application of MDM is sampling from the probability simplex. Mirror descent and mirror Langevin algorithms have been well-studied in the simplex case. In particular, the simplex nicely pairs with the negative entropy function, which is 1-strongly convex by Pinsker's inequality~\cite{Krichene2017AccelerationAA}.

\begin{lemma}
(Pinsker's Inequality). For any distributions $p$ and $q$, we have $D_\text{KL}(p \parallel q) \geq \frac12 \norm{p - q}^2_1$.
\label{le:pinsker}
\end{lemma}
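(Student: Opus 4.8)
The plan is to reduce the general inequality to a one-dimensional calculus statement about two-point (Bernoulli) distributions, which is the cleanest route since it decouples the combinatorial structure of the sum from the analytic estimate. First I would partition the sample space into $A = \{i : p_i \geq q_i\}$ and its complement $A^c$. Writing $P = \sum_{i \in A} p_i$ and $Q = \sum_{i \in A} q_i$, the Scheff\'{e} identity gives $\norm{p - q}_1 = 2(P - Q)$: on $A$ each term $|p_i - q_i| = p_i - q_i$ while on $A^c$ each term equals $q_i - p_i$, and since both $p$ and $q$ sum to one the contributions from $A$ and $A^c$ are equal, each to $P - Q$. Thus the target reduces to showing $D_\text{KL}(p \parallel q) \geq 2(P - Q)^2$.

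Next I would lower-bound the divergence by its two-point coarsening. Applying the log-sum inequality $\sum_i a_i \log(a_i / b_i) \geq \left( \sum_i a_i \right) \log\left( \sum_i a_i / \sum_i b_i \right)$ separately over $A$ and over $A^c$ yields
\[
D_\text{KL}(p \parallel q) \geq P \log \frac{P}{Q} + (1 - P) \log \frac{1 - P}{1 - Q} =: d(P \parallel Q),
\]
the KL divergence between the Bernoulli laws with parameters $P$ and $Q$. This is exactly the monotonicity of KL under coarsening, and it reduces everything to the scalar claim $d(P \parallel Q) \geq 2(P - Q)^2$.

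For the scalar claim, fix $P$ and set $h(Q) = d(P \parallel Q) - 2(P - Q)^2$ on $(0,1)$. I would verify $h(P) = 0$ and then compute
\[
h'(Q) = \frac{Q - P}{Q(1 - Q)} + 4(P - Q) = (Q - P)\left( \frac{1}{Q(1 - Q)} - 4 \right).
\]
Since $Q(1 - Q) \leq \tfrac14$ on $[0,1]$, the bracketed factor is nonnegative, so $h'(Q)$ has the same sign as $Q - P$. Hence $Q = P$ is a global minimum of $h$, where $h(P) = 0$, giving $h \geq 0$ everywhere. Chaining the three steps yields $D_\text{KL}(p \parallel q) \geq 2(P - Q)^2 = \tfrac12 \norm{p - q}_1^2$, as claimed.

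The step I expect to be the crux is the passage from the full divergence to the two-point divergence: one must argue that coarsening by the single partition $\{A, A^c\}$ can only decrease KL. I would justify this directly through the log-sum inequality (equivalently, the convexity of $t \mapsto t \log t$) rather than invoking a black-box data-processing theorem, so that the argument stays self-contained. The Scheff\'{e} identity and the scalar estimate are then routine once this reduction is in place.
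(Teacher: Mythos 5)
Your proof is correct and complete: the Scheff\'{e} identity, the log-sum (data-processing) reduction to the Bernoulli case, and the scalar derivative computation are all accurate, and together they give exactly $D_\text{KL}(p \parallel q) \geq 2(P-Q)^2 = \tfrac12 \norm{p-q}_1^2$. Note, however, that the paper itself does not prove this lemma at all --- it states Pinsker's inequality as a known fact (attributed to the constrained-optimization literature it cites) and immediately uses it as an ingredient in the proof of the 1-convexity of negative entropy. So there is no in-paper argument to compare against; what you have written is the classical self-contained proof, and it is the right one. Two cosmetic points if you want it airtight: the constant $\tfrac12$ presumes KL in nats (consistent with the paper's use of $\ln$ in the negative entropy function), and you should dispatch the degenerate cases up front --- if $q_i = 0$ for some $i$ with $p_i > 0$ the divergence is $+\infty$ and the inequality is trivial, and at $Q \in \{0,1\}$ the binary divergence $d(P \parallel Q)$ is likewise infinite or the claim is vacuous, so the calculus on the open interval $(0,1)$ suffices.
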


\begin{lemma}
(1-Convexity of Negative Entropy). The negative entropy function $\phi(\bx) = \sum_{i = 1}^d x_i \ln x_i$ is 1-convex with respect to the L1 norm.
\end{lemma}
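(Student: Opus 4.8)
The plan is to reduce the claim to Pinsker's inequality (Lemma~\ref{le:pinsker}) by recognizing that $1$-strong convexity of $\phi$ with respect to the $L^1$ norm is equivalent to a lower bound on its Bregman divergence, and that this Bregman divergence is exactly the KL divergence on the simplex. Recall that a twice-differentiable convex $\phi$ is $\mu$-strongly convex with respect to a norm $\norm{\cdot}$ precisely when, for all admissible $p, q$,
\begin{equation}
\phi(q) \geq \phi(p) + \langle \nabla \phi(p), q - p \rangle + \frac{\mu}{2} \norm{q - p}^2.
\end{equation}
The right-hand side beyond the first term is the definition of the Bregman divergence $D_\phi(q, p) = \phi(q) - \phi(p) - \langle \nabla \phi(p), q - p \rangle$, so the condition becomes $D_\phi(q, p) \geq \frac{\mu}{2}\norm{q-p}^2$. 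It therefore suffices to establish this inequality with $\mu = 1$ and $\norm{\cdot} = \norm{\cdot}_1$.

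The first concrete step is to compute the Bregman divergence of the negative entropy. Differentiating gives the $i$-th coordinate $\nabla\phi(p)_i = \ln p_i + 1$, and substituting into the definition yields
\begin{equation}
D_\phi(q, p) = \sum_{i=1}^d q_i \ln q_i - \sum_{i=1}^d q_i \ln p_i - \sum_{i=1}^d (q_i - p_i),
\end{equation}
where the key simplification is that on the simplex $\sum_i q_i = \sum_i p_i = 1$, so the trailing sum vanishes. This collapses the expression to $\sum_i q_i \ln(q_i/p_i) = D_\text{KL}(q \parallel p)$; that is, the negative entropy is precisely the potential whose Bregman divergence is the KL divergence. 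I would then invoke Pinsker's inequality, which reads $D_\text{KL}(q \parallel p) \geq \frac{1}{2}\norm{q - p}_1^2$ (using $\norm{p-q}_1 = \norm{q-p}_1$ to match the orientation in Lemma~\ref{le:pinsker}), to conclude $D_\phi(q, p) \geq \frac{1}{2}\norm{q-p}_1^2$, giving $\mu = 1$ as claimed.

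The main subtlety to handle carefully is that the Bregman-divergence characterization of strong convexity is applied on the affine-constrained domain $\triangle_d$ rather than on all of $\reals^d$: it is exactly the constraint $\sum_i (q_i - p_i) = 0$ that kills the linear term $\sum_i(q_i - p_i)$ and allows the divergence to collapse to $D_\text{KL}$. Consequently the argument only delivers strong convexity relative to displacement directions tangent to the simplex. I would emphasize that this is the relevant notion for our setting, since every update in the primal space is constrained to $\triangle_d$, so the tangent-restricted inequality is exactly what the mirror map argument requires.
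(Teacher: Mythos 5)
Your proposal is correct and follows essentially the same route as the paper's proof: identify the Bregman divergence of the negative entropy on the simplex with $D_\text{KL}$ and then apply Pinsker's inequality (Lemma~\ref{le:pinsker}). You are in fact more careful than the paper, which compresses this into a two-line chain of inequalities; your explicit computation showing that the linear term $\sum_i (q_i - p_i)$ vanishes on $\triangle_d$, and your remark that the resulting strong convexity holds relative to simplex-tangent displacements, make the argument more complete without changing its substance.
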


\begin{proof}
Using the definition of Bregman divergence and Pinsker's inequality in Lemma~\ref{le:pinsker}, we have

\begin{align*}
\phi(\by) 
&\geq \phi(\bx) + \langle \nabla \phi(\bx), \by - \bx \rangle + D_\text{KL}(\by \parallel \bx) \\
&\geq \phi(\bx) + \langle \nabla \phi(\bx), \by - \bx \rangle + \frac12 \norm{\by - \bx}^2_1.
\end{align*}
\end{proof}

Intriguingly, the mirror map defined by the negative entropy function bears resemblance to the shift-scale operators in SSD-LM.

\begin{example}
(Negative Entropy). The gradient of the negative entropy function defines a mirror map $\nabla \phi(\bx) = 1 + \ln \bx$ from $\triangle_d$ to $\reals^d$. The Legendre-Fenchel conjugate of $\phi$ is given by $\phi^*(\by) = \sum_{i = 1}^d \exp(y_i - 1)$, and the inverse mirror map (with Bregman projection) is given by the softmax function.
\label{ex1}
\end{example}

We observe that the mirror map in Example~\ref{ex1} is similar to the shift-scale transformation in Equation~\eqref{eq:shiftscale}. Specifically, the mirror map induced by the negative entropy function is also a step-wise function in the binary case:

\begin{equation}
\nabla \phi(\bx)_i = 
\begin{cases}
1 & x_{i} = 1 \\
- \infty & \text{otherwise.} \\
\end{cases}
\label{eq:mirrormap}
\end{equation}

Negative infinity is numerically unstable to use in practice, which justifies the $k$-logit simplex parametrization in Equation~\eqref{eq:shiftscale}. We could also consider the first layer of the neural network as having an affine transformation that maps 1 and $- \infty$ to $k$ and $-k$, respectively.

Furthermore, the inverse mirror map of the negative entropy function is given by the softmax function as in Equation~\eqref{eq:softmax}, which is exactly the same operator used in SSD-LM to project the denoised iterates back into the probability simplex. Therefore, the logit simplex diffusion method in SSD-LM can be seen as an instantiation of an MDM in the simplex case using the gradient of the negative entropy function as the mirror map to $\reals^d$.

\subsection{Applications}

We highlight potential applications of MDMs by demonstrating how existing problems in both discrete and continuous domains can be reformulated as constrained sampling problems.

\paragraph{Categorical Distributions.} Let $\mathbf{c} = \{ c_1, c_2, \dots, c_k \}$ denote $k$ nominal categories. Suppose we model a dataset $\mathbf{X} = \{ \bx_1, \bx_2, \dots, \bx_n \}$, where each $\bx_i$ is a sequence of length $L$ such that $\bx_i \in \mathbf{c}^L$ and the $\bx_i$'s are i.i.d. The goal is to construct a generative model $p_\theta(\bx) = p_\theta(x_1, x_2, \dots, x_t)$. 

To apply MDMs to this generic problem, we consider one-hot representations; that is, we map each $\bx_i$ to $\bx'_i \in \{ 0, 1\}^{k \times L} \subset \triangle_k^L$, where $\triangle_k^L$ denotes a sequence of $L$ $(k - 1)$-dimensional simplices. We then employ the negative entropy function to create a mirror map $\nabla \phi: \triangle_k^L \to \reals^{k \times L}$. Once an MDM is trained on the Euclidean dual space, the inverse mirror map $\nabla \phi^*$ can be applied to map each of the $L$ columns back to $\triangle_k$ after simulating a reverse SDE, thus completing the generative pipeline. Recall that this is the formulation used in SSD-LM.

Note that if each components of $\bx_i$ are independent, the problem can be drastically simplified to modeling a single variable, then invoking the product rule, i.e., $p(\bx) = \prod_{i = 1}^L p(x_i)$. However, in most real-world applications, this independence assumption does not hold. Most problems of interest usually involve some degree of autoregressiveness (future samples depend on past samples) or locality (samples within a close window are more closely correlated with each other than those that are far away), among other possible interactions. Like standard diffusion models, MDMs place no restriction on the architecture of the score estimator $\bm{\theta}$, which makes it tenable for this purpose.

\paragraph{Distributions with Bounded Support.} Even in continuous domains, generation often requires clipping or projecting predicted iterates to some bounded domain. For instance, in image generation, sampled pixel values must lie between $[0, 255]$ to represent RGB colors. Similar constraints arise in waveform generation, which requires values to be bounded in $[-1.0, 1.0]$. Vanilla diffusion models enforce this boundary constraint by constantly projecting the iterate at each step of the reverse SDE~\cite{lou2023reflected}. Generally, the constraint appears in the form of a closed interval in $\reals$ because quantization is applied to encode continuous values with a fixed bit-width on modern computers.

These interval restrictions make MDMs a compelling choice for modeling continuous data. Recall that polytopes are well-studied objects in constrained optimization~\cite{Bubeck2014ConvexOA}. Typically, logarithmic barrier functions are used to ensure that the iterate does not escape the boundary defined by the interior of the polytope. A crucial observation is that common interval constraints on each component of the data can also geometrically be interpreted as a polytope. Any constrained $d$-dimensional cuboid $[a, b]^d$, where $a, b \in \reals$ and $d$ denotes the data dimension (e.g., number of pixels or duration of audio), is effectively a convex polytope, which makes the application of mirror methods possible via a suitable log barrier function. In short, given their generic formulation, MDMs are in theory applicable to domains beyond the probability simplex, with potential to be employed in popular domains such as image and audio generation.

\subsection{Limitations}

\paragraph{Speed.} MDMs build upon the standard SGM paradigm and is thus prone to the same limitations of general diffusion models. In particular, generating samples from diffusion models is known to be slow due to the iterative nature of their sampling process. Improving the sampling speed of diffusion models is an active area of research~\cite{song2023consistency}, and we fully expect these developments to give rise to improved MDM methods.

\paragraph{Hessian.} The reverse SDE in Equation~\eqref{eq:mdm-reverse} involves a Hessian. Although the Hessian is straightforward to calculate in certain cases, such as the simplex and the negative entropy function, this may not always be the case. In particular, the tractability of the Hessian is directly contingent on the definition of the mirror map. Depending on the problem statement, a canonical mirror map may not always exist, and even if it does, there is no guarantee as to its computability. Last but not least, this report is theoretical in nature; we leave rigorous technical experiments for future work.

\section{Related Works}

\subsection{Constrained Sampling via Projection}

One way of adapting Langevin dynamics for constrained sampling is by applying operators that push out-of-bound samples back into the desired domain through methods such as projection or reflection. Here, we discuss two methods that fall under this category.

\paragraph{Projected Langevin Algorithm.} The Projected Langevin Algorithm (PLA)~\cite{Bubeck2015SamplingFA,Bubeck2015FiniteTimeAO} adds an additional projection step to ULA. The PLA update step can be written as

$$
\bx_{k + 1} = \text{proj}_\Omega (\bx_k - \eta \nabla f(\bx_k) + \sqrt{2 \eta} \mathbf{z}_k),
$$

where $\text{proj}_\Omega$ denotes a projection operation onto $\Omega$, and $\eta$ is the step size. The projection ensures that samples stay within the specified domain throughout the sampling process. Specifically, projected samples are always sent to the boundary of $\Omega$.

\paragraph{Reflected SDEs.} Another approach, Reflected Diffusion Models~\cite{lou2023reflected}, draws inspiration from reflected SDEs, which typically take the form

$$
\dd \bx = \mathbf{f}(\bx, t) \dd t + g(t) \dd \mathbf{w} + d \mathbf{l},
$$

where $\mathbf{f}: \Omega \times \reals \to \reals^d$ and $\mathbf{l}$ is the additional boundary constraint that forces the particle to stay in $\Omega$ by neutralizing any outward normal-pointing component. Let $\bar{\mathbf{l}}$ denote the reversed boundary condition. The reverse SDE is then given by 

$$
\dd \bx = (\mathbf{f}(\bx, t) - g(t)^2 \nabla_\bx \log p_t(\bx)) \dd t + g(t) \dd \bar{\mathbf{w}} + \dd \bar{\mathbf{l}}.
$$

The Euler-Maruyama discretization of a reflected SDE is given by

$$
\bx_{t + \Delta t} = \mathcal{O} (\bx_t + f(\bx_t, t) \Delta t + g(t) \mathbf{z}_{\Delta t}),
$$

where $\mathcal{O}$ is some suitable operator that enforces the boundary condition. In practice, reverse SDEs are often  implemented as a projection $\text{proj}_\Omega(x) = \argmin_{y \in \Omega} d(x, y)$. PLA can thus be viewed as a particular discretized realization of a reflected SDE. Another common operator is reflection $\text{refl}_\Omega$, where the normal component that moves the particle out of $\Omega$ is reversed across the boundary surface.

\subsection{Simplex Diffusion via Cox–Ingersoll–Ross}

Simplex diffusion is a specific instance of constrained sampling where the support $\Omega = \triangle_n$. A natural application of simplex diffusion is for modeling discrete categorical variables, such as in language modeling. One approach~\cite{richemond2022categorical} proposes to model simplex diffusion via the Cox-Ingersoll-Ross (CIR) process, whose terminal distribution is known to be the Gamma distribution. CIR is defined as

$$
\dd x_t = \beta(\alpha - x_t) \dd t + \sigma \sqrt{x_t} \dd w_t,
$$

where $\alpha, \beta, \sigma > 0$ are process parameters. The terminal distribution is then $\mathcal{G}(2 \alpha \beta / \sigma^2, 2 \beta / \sigma^2)$. Notably, setting $2 \beta = \sigma^2$ yields $\mathcal{G}(\alpha, 1)$. By simulating $n$ independent CIR processes to sample $\bx = (x_1, x_2, \dots, x_n)$, where each $x_i \sim \mathcal{G}(\alpha_i, 1)$, one can sample from a Dirichlet distribution via

$$
\left(\frac{x_1}{\sum_{i = 1}^n x_i}, \frac{x_1}{\sum_{i = 1}^n x_i}, \dots, \frac{x_n}{\sum_{i = 1}^n x_i} \right) \sim \mathcal{D}(\bm{\alpha}).
$$

If $\bm{\alpha} = \mathbbm{1}_d$, then $\mathcal{D}(\bm{\alpha}) = \mathcal{U}(\triangle_n)$, i.e., the uniform distribution over the simplex. The generative process is then given by the reverse-time SDE. The known results from score-based SDE methods, e.g., reverse SDE, evidence lower bound, and maximum likelihood estimation, remain applicable under this framework.

\section{Conclusion}

We propose Mirror Diffusion Models, which adapts diffusion models for constrained sampling problems using mirror Langevin dynamics. We provide a theoretical account of a recent method in simplex diffusion, which uses shift-scaling and softmax operations to map variables on the simplex to the Euclidean space, and vice versa. We show that these operations can elegantly be understood in the framework of mirror Langevin dynamics, which involves a mirror map that pushes variables on the constrained space forward to the unconstrained dual space, and a corresponding inverse mirror map which pulls dual variables back to the primal space. We demonstrate potential applications of MDMs to modeling of both discrete and bounded continuous distributions.

\acksection
We express gratitude to Andre Wibisono, Jane Lee, Jiaming Liang, and Arman Cohan for continuous feedback, insight, and support throughout the completion of this work.

\bibliographystyle{plain}
\bibliography{refs}

\begin{thebibliography}{10}

\bibitem{Ahn2020EfficientCS}
Kwangjun Ahn and Sinho Chewi.
\newblock Efficient constrained sampling via the mirror-langevin algorithm.
\newblock In {\em Neural Information Processing Systems}, 2020.

\bibitem{Anderson1982ReversetimeDE}
Brian. D.~O. Anderson.
\newblock Reverse-time diffusion equation models.
\newblock {\em Stochastic Processes and their Applications}, 12:313--326, 1982.

\bibitem{Bansal2022ColdDI}
Arpit Bansal, Eitan Borgnia, Hong-Min Chu, Jie Li, Hamideh Kazemi, Furong
  Huang, Micah Goldblum, Jonas Geiping, and Tom Goldstein.
\newblock Cold diffusion: Inverting arbitrary image transforms without noise.
\newblock {\em ArXiv}, abs/2208.09392, 2022.

\bibitem{Bubeck2014ConvexOA}
S{\'e}bastien Bubeck.
\newblock Convex optimization: Algorithms and complexity.
\newblock {\em Found. Trends Mach. Learn.}, 8:231--357, 2014.

\bibitem{Bubeck2015FiniteTimeAO}
S{\'e}bastien Bubeck, Ronen Eldan, and Joseph Lehec.
\newblock Finite-time analysis of projected langevin monte carlo.
\newblock In {\em NIPS}, 2015.

\bibitem{Bubeck2015SamplingFA}
S{\'e}bastien Bubeck, Ronen Eldan, and Joseph Lehec.
\newblock Sampling from a log-concave distribution with projected langevin
  monte carlo.
\newblock {\em Discrete \& Computational Geometry}, 59:757--783, 2015.

\bibitem{han2022ssdlm}
Xiaochuang Han, Sachin Kumar, and Yulia Tsvetkov.
\newblock Ssd-lm: Semi-autoregressive simplex-based diffusion language model
  for text generation and modular control, 2022.

\bibitem{Ho2020DenoisingDP}
Jonathan Ho, Ajay Jain, and P.~Abbeel.
\newblock Denoising diffusion probabilistic models.
\newblock In {\em Neural Information Processing Systems}, 2020.

\bibitem{Ho2022VideoDM}
Jonathan Ho, Tim Salimans, Alexey Gritsenko, William Chan, Mohammad Norouzi,
  and David~J. Fleet.
\newblock Video diffusion models.
\newblock In {\em ICLR 2022 Workshop DGM4HSD}, 2022.

\bibitem{Jiang2021MirrorLM}
Qijia Jiang.
\newblock Mirror langevin monte carlo: the case under isoperimetry.
\newblock In {\em Neural Information Processing Systems}, 2021.

\bibitem{Jordan1996THEVF}
Richard Jordan, David Kinderlehrer, and Felix Otto.
\newblock The variational formulation of the fokker-planck equation.
\newblock {\em Siam Journal on Applied Mathematics}, 1996.

\bibitem{Kingma2021VariationalDM}
Diederik~P. Kingma, Tim Salimans, Ben Poole, and Jonathan Ho.
\newblock Variational diffusion models.
\newblock In {\em Neural Information Processing Systems}, 2021.

\bibitem{Kong2020DiffWaveAV}
Zhifeng Kong, Wei Ping, Jiaji Huang, Kexin Zhao, and Bryan Catanzaro.
\newblock Diffwave: A versatile diffusion model for audio synthesis.
\newblock In {\em ICLR}, 2020.

\bibitem{Krichene2017AccelerationAA}
Walid Krichene and Peter~L. Bartlett.
\newblock Acceleration and averaging in stochastic descent dynamics.
\newblock In {\em NIPS}, 2017.

\bibitem{li2021mirror}
Ruilin Li, Molei Tao, Santosh~S. Vempala, and Andre Wibisono.
\newblock The mirror langevin algorithm converges with vanishing bias, 2021.

\bibitem{lou2023reflected}
Aaron Lou and Stefano Ermon.
\newblock Reflected diffusion models, 2023.

\bibitem{Popov2021GradTTSAD}
Vadim Popov, Ivan Vovk, Vladimir Gogoryan, Tasnima Sadekova, and Mikhail
  Kudinov.
\newblock Grad-tts: A diffusion probabilistic model for text-to-speech.
\newblock In {\em International Conference on Machine Learning}, 2021.

\bibitem{richemond2022categorical}
Pierre~H. Richemond, Sander Dieleman, and Arnaud Doucet.
\newblock Categorical sdes with simplex diffusion, 2022.

\bibitem{song2023consistency}
Yang Song, Prafulla Dhariwal, Mark Chen, and Ilya Sutskever.
\newblock Consistency models.
\newblock In {\em International Conference on Machine Learning}, 2023.

\bibitem{Song2020ScoreBasedGM}
Yang Song, Jascha~Narain Sohl-Dickstein, Diederik~P. Kingma, Abhishek Kumar,
  Stefano Ermon, and Ben Poole.
\newblock Score-based generative modeling through stochastic differential
  equations.
\newblock In {\em ICLR}, 2021.

\bibitem{Tae2021EdiTTSSE}
Jaesung Tae, Hyeongju Kim, and Taesu Kim.
\newblock Editts: Score-based editing for controllable text-to-speech.
\newblock In {\em Interspeech}, 2021.

\bibitem{Vincent2011ACB}
Pascal Vincent.
\newblock A connection between score matching and denoising autoencoders.
\newblock {\em Neural Computation}, 23:1661--1674, 2011.

\end{thebibliography}

\end{document}